\def\eqref#1{equation~\ref{#1}}
\def\1{\bm{1}}
\def\va{{\bm{a}}}
\def\vb{{\bm{b}}}
\def\vx{{\bm{x}}}
\def\vy{{\bm{y}}}
\def\vhy{{\bm{\hat{y}}}}
\def\vz{{\bm{z}}}
\def\mP{{\bm{P}}}
\def\mX{{\bm{X}}}
\def\mY{{\bm{Y}}}
\def\mhY{{\bm{\hat{Y}}}}
\DeclareMathAlphabet{\mathsfit}{\encodingdefault}{\sfdefault}{m}{sl}
\SetMathAlphabet{\mathsfit}{bold}{\encodingdefault}{\sfdefault}{bx}{n}
\newcommand{\Ls}{\mathcal{L}}
\newcommand{\R}{\mathbb{R}}
\DeclareMathOperator*{\argmin}{arg\,min}
\def \enc {{g_{\text{enc}}}}
\def \dec {{g_{\text{dec}}}}
\def \Lrepr {{L_{\text{repr}}}}
\def \Lcha {{L_{\text{cha}}}}
\def \Lhun {{L_{\text{hun}}}}
\def \Lset {{L_{\text{set}}}}
\newtheorem{thm}{Theorem}
\newtheorem{definition}{Definition}
\title{Deep Set Prediction Networks}
\author{%
  Yan Zhang \\
  University of Southampton \\
  Southampton, UK \\
  \texttt{yz5n12@ecs.soton.ac.uk} \\
  \And
  Jonathon Hare \\
  University of Southampton \\
  Southampton, UK \\
  \texttt{jsh2@ecs.soton.ac.uk} \\
  \And
  Adam Pr\"ugel-Bennett \\
  University of Southampton \\
  Southampton, UK \\
  \texttt{apb@ecs.soton.ac.uk} \\
}
\begin{document}

\maketitle


\begin{abstract}
    Current approaches for predicting sets from feature vectors ignore the unordered nature of sets and suffer from discontinuity issues as a result.
    We propose a general model for predicting sets that properly respects the structure of sets and avoids this problem.
    With a single feature vector as input, we show that our model is able to auto-encode point sets, predict the set of bounding boxes of objects in an image, and predict the set of attributes of these objects.

\end{abstract}

\section{Introduction}
You are given a rotation angle and your task is to draw the four corner points of a square that is rotated by that amount.
This is a structured prediction task where the output is a \emph{set}, since there is no inherent ordering to the four points.
Such sets are a natural representation for many kinds of data, ranging from the set of points in a point cloud, to the set of objects in an image (object detection), to the set of nodes in a molecular graph (molecular generation).
Yet, existing machine learning models often struggle to solve even the simple square prediction task \citep{Anonymous2019}.

The main difficulty in predicting sets comes from the ability to permute the elements in a set freely, which means that there are $n!$ equally good solutions for a set of size $n$.
Models that do not take this set structure into account properly (such as MLPs or RNNs) result in discontinuities, which is the reason why they struggle to solve simple toy set prediction tasks \citep{Anonymous2019}.
We give background on what the problem is in \autoref{sec:background}.


How can we build a model that properly respects the set structure of the problem so that we can predict sets without running into discontinuity issues?
In this paper, we aim to address this question. 
Concretely, we contribute the following:

\begin{enumerate}
    \item We propose a model (\autoref{sec:model}, \autoref{alg:dspn}) that can predict a set from a feature vector (vector-to-set) while properly taking the structure of sets into account.
        We explain what properties we make use of that enables this.
        Our model uses backpropagation through a set encoder to decode a set and works for variable-size sets.
        The model is applicable to a wide variety of set prediction tasks since it only requires a feature vector as input.
    \item We evaluate our model on several set prediction datasets (\autoref{sec:experiments}).
        First, we demonstrate that the auto-encoder version of our model is sound on a set version of MNIST.
        Next, we use the CLEVR dataset to show that this works for general set prediction tasks.
        We predict the set of bounding boxes of objects in an image and we predict the set of object attributes in an image, both from a single feature vector.
        Our model is a completely different approach to usual anchor-based object detectors because we pose the task as a set prediction problem, which does not need complicated post-processing techniques such as non-maximum suppression.
\end{enumerate}

\section{Background}\label{sec:background}
\paragraph{Representation}
We are interested in sets of feature vectors with the feature vector describing properties of the element, for example the 2d position of a point in a point cloud.
A set of size $n$ wherein each feature vector has dimensionality $d$ is represented as a matrix $\mY \in \R^{d \times n}$ with the elements as columns in an arbitrary order, $\mY = [\vy_1, \ldots, \vy_n]$.
To properly treat this as a set, it is important to only apply operations with certain properties to it \citep{Zaheer2017}: \emph{permutation-invariance} or \emph{permutation-equivariance}.
In other words, operations on sets should not rely on the arbitrary ordering of the elements.

Set encoders (which turn such sets into feature vectors) are usually built by composing permutation-equivariant operations with a permutation-invariant operation at the end.
A simple example is the model in \cite{Zaheer2017}: $f(\mY) = \sum_i g(\vy_i)$ where $g$ is a neural network.
Because $g$ is applied to every element individually, it does not rely on the arbitrary order of the elements.
We can think of this as turning the set $\{\vy_i\}_{i=1}^n$ into $\{g(\vy_i)\}_{i=1}^n$.
This is permutation-equivariant because changing the order of elements in the input set affects the output set in a predictable way.
Next, the set is summed to produce a single feature vector.
Since summing is commutative, the output is the same regardless of what order the elements are in.
In other words, summing is permutation-invariant.
This gives us an encoder that produces the same feature vector regardless of the arbitrary order the set elements were stored in.

\paragraph{Loss}
In set prediction tasks, we need to compute a loss between a predicted set $\mhY = [\vhy_1, \ldots, \vhy_n]$ and the target set $\mY$.
The main problem is that the elements of each set are in an arbitrary order, so we cannot simply compute a pointwise distance.
The usual solution to this is an assignment mechanism that matches up elements from one set to the other set.
This gives us a loss function that is permutation-invariant in both its arguments.

One such loss is the $O(n^2)$ Chamfer loss, which matches up every element of $\mhY$ to the closest element in $\mY$ and vice versa:

\begin{equation}
    \Lcha(\mhY, \mY) = \sum_i \min_j || \vhy_i - \vy_j ||^2 + \sum_j \min_i || \vhy_i - \vy_j ||^2
\end{equation}

Note that this does not work well for multi-sets: the loss between $[\va, \va, \vb]$, $[\va, \vb, \vb]$ is 0.
A more sophisticated loss that does not have this problem involves the linear assignment problem with the pairwise losses as assignment costs:

\begin{equation}
    \Lhun(\mhY, \mY) = \min_{\pi \in \Pi} || \vhy_i - \vy_{\pi(i)} ||^2
\end{equation}

where $\Pi$ is the space of permutations, which can be solved with the Hungarian algorithm in $O(n^3)$ time.
This has the benefit that every element in one set is associated to exactly one element in the other set, which is not the case for the Chamfer loss.


\paragraph{Responsibility problem}


A widely-used approach is to simply ignore the set structure of the problem.
A feature vector can be mapped to a set $\mhY$ by using an MLP that takes the vector as input and directly produces $\mhY$ with $d \times n$ outputs.
Since the order of elements in $\mhY$ does not matter, it appears reasonable to always produce them in a certain order based on the weights of the MLP.

While this seems like a promising approach, \cite{Anonymous2019} point out that this results in a discontinuity issue: there are points where a small change in set space requires a large change in the neural network outputs.
The model needs to ``decide'' which of its outputs is responsible for producing which element, and this responsibility must be resolved discontinuously.


%
The intuition behind this is as follows.
Consider an MLP that detects the colour of two otherwise identical objects present in an image, so it has two outputs with dimensionality 3 (R, G, B) corresponding to those two colours.
We are given an image with a blue and red object, so let us say that output 1 predicts blue and output 2 predicts red; perhaps the weights of output 1 are more attuned to the blue channel and output 2 is more attuned to the red channel.
We are given another image with a blue and green object, so it is reasonable for output 1 to again predict blue and output 2 to now predict green.
When we now give the model an image with a red and green object, or two red objects, it is unclear which output should be responsible for predicting which object.
Output 2 ``wants'' to predict both red and green, but has to decide between one of them, and output 1 now has to be responsible for the other object while previously being a blue detector.
This responsibility must be resolved discontinuously, which makes modeling sets with MLPs difficult \citep{Anonymous2019}.

The main problem is that there is a notion of output 1 and output 2 -- an ordered output representation -- in the first place, which forces the model to give the set an order.
Instead, it would be better if the outputs of the model were freely interchangeable -- in the same way the elements of the set are interchangeable --  to not impose an order on the outputs.
This is exactly what our model accomplishes.

\section{Deep Set Prediction Networks}\label{sec:model}
This section contains our primary contribution: a model for decoding a feature vector into a set of feature vectors.
As we have previously established, it is important for the model to properly respect the set structure of the problem to avoid the responsibility problem.

Our main idea is based on the observation that the gradient of a set encoder with respect to the input set is permutation-equivariant (see proof in \autoref{app:proof}):
\emph{to decode a feature vector into a set, we can use gradient descent to find a set that \emph{encodes} to that feature vector}.
Since each update of the set using the gradient is permutation-equivariant, we always properly treat it as a set and avoid the responsibility problem.
This gives rise to a nested optimisation: an inner loop that changes a set to encode more similarly to the input feature vector, and an outer loop that changes the weights of the encoder to minimise a loss over a dataset.

With this idea in mind, we build up models of increasing usefulness for predicting sets.
We start with the simplest case of auto-encoding fixed-size sets (\autoref{sec:fixed}), where a latent representation is decoded back into a set.
This is modified to support variable-size sets, which is necessary for most sets encountered in the real-world.
Lastly and most importantly, we extend our model to general set prediction tasks where the input no longer needs to be a set (\autoref{sec:predict}).
This gives us a model that can predict a set of feature vectors from a single feature vector.
We give the pseudo-code of this method in \autoref{alg:dspn}.

\begin{algorithm}
    \caption{
    One forward pass of the set prediction algorithm within the training loop.
    }
    \label{alg:dspn}
    \begin{algorithmic}[1]

        \State $\vz = F(x)$
        \Comment{encode input with a model}

        \State $\mhY^{(0)} \gets \text{init}$
        \Comment{initialise set}

        \For{$\text{t} \gets 1, T$}
            \State $l \gets \Lrepr(\mhY^{(t - 1)}, \vz)$
            \Comment{compute representation loss}

            \State $\mhY^{(t)} \gets \mhY^{(t - 1)} - \eta \frac{\partial l}{\partial \mhY^{(t - 1)}}$
            \Comment{gradient descent step on the set}
        \EndFor

        \State predict $\mhY^{(T)}$

        \State $\Ls = \frac{1}{T} \sum_{t=0}^T \Lset(\mhY^{(t)}, \mY) + \lambda \Lrepr(\mY, \vz)$
        \Comment{compute loss of outer optimisation}
    \end{algorithmic}
\end{algorithm}

\subsection{Auto-encoding fixed-size sets}\label{sec:fixed}
In a set auto-encoder, the goal is to turn the input set $\mY$ into a small latent space $\vz = \enc(\mY)$ with the encoder $\enc$ and turn it back into the predicted set $\mhY = \dec(\vz)$ with the decoder $\dec$.
Using our main idea, we define a \emph{representation loss} and the corresponding decoder as:

\begin{equation}
    \Lrepr(\mhY, \vz) = || \enc(\mhY) - \vz ||^2
\end{equation}%
\begin{equation}\label{eq:dec}
    \dec(\vz) = \argmin_{\mhY} \Lrepr(\mhY, \vz)
\end{equation}

In essence, $\Lrepr$ compares $\mhY$ to $\mY$ in the latent space.
To understand what the decoder does, first consider the simple, albeit not very useful case of the identity encoder $\enc(\mY) = \mY$.
Solving $\dec(\vz)$ simply means setting $\mhY = \mY$, which perfectly reconstructs the input as desired.

When we instead choose $\enc$ to be a set encoder, the latent representation $\vz$ is a permutation-invariant feature vector.
If this representation is ``good'', $\mhY$ will only encode to similar latent variables as $\mY$ if the two sets themselves are similar.
Thus, the minimisation in \autoref{eq:dec} should still produce a set $\mhY$ that is the same (up to permutation) as $\mY$, except this has now been achieved with $\vz$ as a bottleneck.

Since the problem is non-convex when $\enc$ is a neural network, it is infeasible to solve \autoref{eq:dec} exactly.
Instead, we perform gradient descent to approximate a solution.
Starting from some initial set $\mhY^{(0)}$, gradient descent is performed for a fixed number of steps $T$ with the update rule:

\begin{equation}
    \mhY^{(t+1)} = \mhY^{(t)} - \eta \cdot \frac{\partial \Lrepr(\mhY^{(t)}, \vz)}{\partial \mhY^{(t)}}
\end{equation}

with $\eta$ as the learning rate and the prediction being the final state, $\dec(\vz) = \mhY^{(T)}$.
This is the aforementioned inner optimisation loop.
In practice, we let $\mhY^{(0)}$ be a learnable $\R^{d \times n}$ matrix which is part of the neural network parameters.

To obtain a good representation $\vz$, we still have to train the weights of $\enc$.
For this, we compute the auto-encoder objective $\Lset(\mhY^{(T)}, \mY)$ -- with $\Lset=\Lcha$ or $\Lhun$ -- and differentiate with respect to the weights as usual, backpropagating through the steps of the inner optimisation.
This is the aforementioned outer optimisation loop.

In summary, each forward pass of our auto-encoder first encodes the input set to a latent representation as normal.
To decode this back into a set, gradient descent is performed on an initial guess with the aim to obtain a set that encodes to the same latent representation as the input.
The same set encoder is used in the encoding and decoding stages.

\paragraph{Variable-size sets}
To extend this from fixed- to variable-size sets, we make a few modifications to this algorithm.
First, we pad all sets to a fixed maximum size to allow for efficient batch computation.
We then concatenate an additional mask feature $m_i$ to each set element $\vhy_i$ that indicates whether it is a regular element ($m_i=1$) or padding ($m_i=0$).
With this modification to $\mhY$, we can optimise the masks in the same way as the set elements are optimised.
To ensure that masks stay in the valid range between 0 and 1, we simply clamp values above 1 to 1 and values below 0 to 0 after each gradient descent step.
This performed better than using a sigmoid in our initial experiments, possibly because it allows exact 0s and 1s to be recovered.

\subsection{Predicting sets from a feature vector}\label{sec:predict}
In our auto-encoder, we used an encoder to produce both the latent representation as well as to decode the set.
This is no longer possible in the general set prediction setup, since the target representation $\vz$ can come from a separate model (for example an image encoder $F$ encoding an image $\vx$), so there is no longer a set encoder in the model.

When na\"ively using $\vz = F(\vx)$ as input to our decoder, our decoding process is unable to predict sets correctly from it.
Because the set encoder is no longer shared in our set decoder, there is no guarantee that optimising $\enc(\mhY)$ to match $\vz$ converges towards $\mY$ (or a permutation thereof).
To fix this, we simply add a term to the loss of the outer optimisation that encourages $\enc(\mY) \approx \vz$ again.
In other words, the target set should have a very low representation loss itself.
This gives us an additional $\Lrepr$ term in the loss function of the outer optimisation for supervised learning:

\begin{equation}
    \Ls = \Lset(\mhY, \mY) + \lambda \Lrepr(\mY, \vz)
\end{equation}

with $\Lset$ again being either $\Lcha$ or $\Lhun$.
With this, minimising $\Lrepr(\mhY, \vz)$ in the inner optimisation will converge towards $\mY$.
The additional term is not necessary in the pure auto-encoder because $\vz = \enc(\mY)$, so $\Lrepr(\mY, \vz)$ is always 0 already.

\paragraph{Practical tricks}

For the outer optimisation, we can compute the set loss for not only $\mhY^{(T)}$, but all $\mhY^{(t)}$.
That is, we use the average set loss $\frac{1}{T} \sum_t \Lset(\mhY^{(t)}, \mY)$ as loss (similar to \cite{Belanger2017}).
This encourages $\mhY$ to converge to $\mY$ quickly and not diverge with more steps, which increases the robustness of our algorithm.

We sometimes observed divergent training behaviour when the outer learning rate is set inappropriately.
By replacing the instances of $|| \cdot ||^2$ in $\Lset$ and $\Lrepr$ with the Huber loss (squared error for differences below 1 and absolute error above 1) -- as is commonly done in object detection models -- training became less sensitive to hyperparameter choices.

The inner optimisation can be modified to include a momentum term, which stops a prediction from oscillating around a solution.
This gives us slightly better results, but we did not use this for any experiments to keep our method as simple as possible.

It is possible to explicitly include the sum of masks as a feature in the representation $\vz$ for our model.
This improves our results on MNIST -- likely due to the explicit signal for the model to predict the correct set size -- but again, we do not use this for simplicity.

\section{Related work}
The main approach we compare our method to is the simple method of using an MLP decoder to predict sets.
This has been used for predicting point clouds \citep{Achlioptas2018, Fan2017}, bounding boxes \citep{Rezatofighi2018, Balles2019}, and graphs (sets of nodes and edges) \citep{Cao2018a, Simonovsky2018a}.
These predict an ordered representation (list) and treat it as if it is unordered (set).
As we discussed in \autoref{sec:background}, this approach runs into the responsibility problem.
Some works on predicting 3d point clouds make domain-specific assumptions such as independence of points within a set \citep{Li2018b} or grid-like structures \citep{Yang2018}.
To avoid inefficient graph matching losses, \citet{Yang2019} compute a permutation-invariant loss between graphs by comparing them in the latent space (similar to our $\Lrepr$) in an adversarial setting.

An alternative approach is to use an RNN decoder to generate this list \citep{Li2018a, Stewart2015a, Vinyals2015}.
The problem can be made easier if it can be turned from a set into a sequence problem by giving a canonical order to the elements in the set through domain knowledge \citep{Vinyals2015}.
For example, \citet{You2018} generate the nodes of a graph by ordering the set of nodes based on the traversal order of a breadth-first search.

The closest work to ours is by Mordatch \cite{Mordatch2018a}.
They also iteratively minimise a function (their energy function) in each forward pass of the neural network and differentiate through the iteration to learn the weights.
They have only demonstrated that this works for modifying small sets of 2d elements in relatively simple ways, so it is unclear whether their approach scales to the harder problems such as object detection that we tackle in this paper.
In particular, minimising $\Lrepr$ in our model has the easy-to-understand consequence of making the predicted set more similar to the target set, while it is less clear what minimising their learned energy function $E(\mhY, \vz)$ does.

Zhang et al. \cite{Anonymous2019} construct an auto-encoder that pools a set into a feature vector where information from the encoder is shared with their decoder.
This is done to make their decoder permutation-equivariant, which they use to avoid the responsibility problem.
However, this strictly limits their decoder to usage in auto-encoders -- not set prediction -- because it requires an encoder to be present during inference.

Greff et al. \cite{Greff2019a} construct an auto-encoder for images with a set-structured latent space.
They are able to find latent sets of variables to describe an image composed of a set of objects with some task-specific assumptions.
While interesting from a representation learning perspective, our model is immediately useful in practice because it works for general supervised learning tasks.

Our inspiration for using backpropagation through an encoder as a decoder comes from the line of introspective neural networks \citep{Lazarow2017a, Lee2017a} for image modeling.
An important difference is that in these works, the two optimisation loops (generating predictions and learning the network weights) are performed in sequence, while ours are nested.
The nesting allows our outer optimisation to differentiate through the inner optimisation.
This type of nested optimisation to obtain structured outputs with neural networks was first studied in \cite{Belanger2016, Belanger2017}, of which our model can be considered an instance of.
Note that \cite{Greff2019a} and \cite{Mordatch2018a} also differentiate through an optimisation, which suggests that this approach is of general benefit when working with sets.
By differentiating through a decoder rather than an encoder, \citet{Bojanowski2018} learn a representation instead of a prediction.

It is important to clearly separate the vector-to-set setting in this paper from some related works on set-to-set mappings, such as the equivariant version of Deep Sets \cite{Zaheer2017} and self-attention \cite{Vaswani2017}.
Tasks like object detection, where no set input is available, can not be solved with set-to-set methods alone;
the feature vector from the image encoder has to be turned into a set first, for which a vector-to-set model like ours is necessary.
Set-to-set methods do not have to deal with the responsibility problem, because the output usually has the same ordering as the input.
Methods like \citep{Mena2018} and \citep{Zhang2019} learn to predict a permutation matrix for a set (set-to-set-of-position-assignments).
When this permutation is applied to the input set, the set is turned into a list (set-to-list).
Again, our model is about producing a set as \emph{output} while not necessarily taking a set as input.

\section{Experiments}\label{sec:experiments}
In the following experiments, we compare our set prediction network to a model that uses an MLP or RNN (LSTM) as set decoder.
In all experiments, we fix the hyperparameters of our model to $T=10, \eta = 800, \lambda=0.1$.
Further details about the model architectures, training settings, and hyperparameters are given in \autoref{app:details}.
We provide the PyTorch \cite{Paszke2017} source code to reproduce all experiments at \url{https://github.com/Cyanogenoid/dspn}.
Note that there are two minor bugs in the following results: the CLEVR experiments computed the set loss only on $\mhY^{(T)}$ and the bounding box results are consistently overestimated.
\autoref{app:fixed} shows our results with these bugs fixed.

\begin{figure}
    \centering
    \includegraphics[width=\linewidth, trim=0 3mm 0 0, clip]{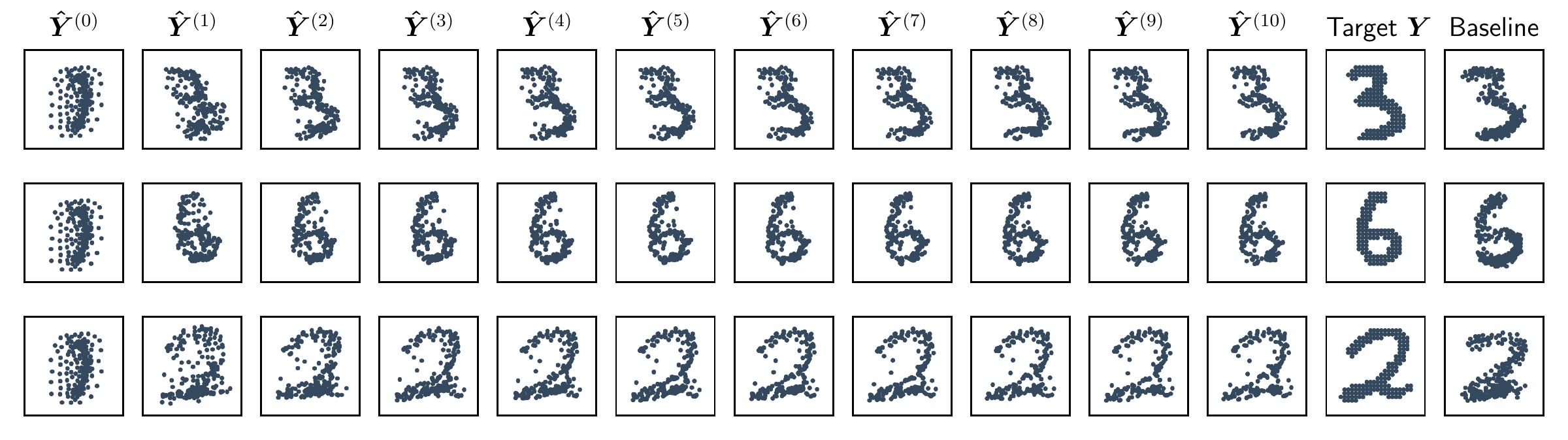}
    \vspace{-3mm}
    \caption{Progression of set prediction algorithm on MNIST ($\mhY^{(t)}$). Our predictions come from our model with $0.08 \times 10^{-3}$ loss, while the baseline predictions come from an MLP decoder model with $0.09 \times 10^{-3}$ loss.}
    \label{fig:mnist}
\end{figure}

\begin{table}[tp!]
    \centering
\caption{Chamfer reconstruction loss on MNIST in thousandths. Lower is better. Mean and standard deviation over 6 runs.}
    \label{tab:mnist}
    \begin{tabular}{lccccc}
        \toprule
        Model & Loss\\
        \midrule
        MLP baseline & 0.21\tiny$\pm$0.18\\
        RNN baseline & 0.49\tiny$\pm$0.19\\
        Ours & \textbf{0.09}\tiny$\pm$0.01\\
        \bottomrule
    \end{tabular}
    \vspace{-1mm}
\end{table}

\subsection{MNIST}
We begin with the task of auto-encoding a set version of MNIST.
A set is constructed from each image by including all the pixel coordinates (x and y, scaled to the interval $[0, 1]$) of pixels that have a value above the mean pixel value.
The size of these sets varies from 32 to 342 across the dataset.

\paragraph{Model}
In our model, we use a set encoder that processes each element individually with a 3-layer MLP, followed by FSPool \citep{Anonymous2019} as pooling function to produce 64 latent variables.
These are decoded with our algorithm to predict the input set.
We compare this against a baseline model with the same encoder, but with a traditional MLP or LSTM as decoder.
This approach to decoding sets is used in models such as in \cite{Achlioptas2018} (AE-CD variant) and \cite{Stewart2015a}; these baselines are representative of the best approaches for set prediction in the literature.
Note that these baselines have significantly more parameters than our model, since our decoder has almost no additional parameters by sharing the encoder weights (ours: $\sim$140 000 parameters, MLP: $\sim$530 000, LSTM: $\sim$470 000).
For the baselines, we include a mask feature with each element to allow for variable-size sets.
Due to the large maximum set size, use of Hungarian matching is too slow.
Instead, we use the Chamfer loss to compute the loss between predicted and target set in this experiment.

\paragraph{Results}
\autoref{tab:mnist} shows that our model improves over the two baselines.
In \autoref{fig:mnist}, we show the progression of $\mhY$ throughout the minimisation with $\mhY^{(10)}$ as the final prediction, the ground-truth set, and the baseline prediction of an MLP decoder.
Observe how every optimisation starts with the same set $\mhY^{(0)}$, but is transformed differently depending on the gradient of $\enc$.
Through this minimisation of $\Lrepr$ by the inner optimisaton, the set is gradually changed into a shape that closely resembles the correct digit.

The types of errors of our model and the baseline are different, despite the use of models with similar losses in \autoref{fig:mnist}.
Errors in our model are mostly due to scattered points outside of the main shape of the digit, which is particularly visible in the third row.
We believe that this is due to the limits of the encoder used: an encoder that is not powerful enough maps the slightly different sets to the same representation, so there is no $\Lrepr$ gradient to work with.
It still models the general shape accurately, but misses the fine details of these scattered points.
The MLP decoder has less of this scattering, but makes mistakes in the shape of the digit instead.
For example, in the third row, the baseline has a different curve at the top and a shorter line at the bottom.
This difference in types of errors is also present in the extended examples in \autoref{fig:mnist-full}.

Note that reconstructions shown in \cite{Anonymous2019} for the same auto-encoding task appear better because their decoder uses additional information outside of the latent space: they copy multiple $n \times n$ matrices from the encoder into the decoder.
In contrast, all information about the set is completely contained in our permutation-invariant latent space.

\subsection{Bounding box prediction}
Next, we turn to the task of object detection on the CLEVR dataset \citep{Johnson2017a}, which contains 70,000 training and 15,000 validation images.
The goal is to predict the set of bounding boxes for the objects in an image.
The target set contains at most 10 elements with 4 dimensions each: the (normalised) x-y coordinates of the top-left and bottom-right corners of each box.
As the dataset does not contain bounding box information canonically, we use \citep{Desta2018a} to calculate approximate bounding boxes.
This causes the ground-truth bounding boxes to not always be perfect, which is a source of noise.

\paragraph{Model}
We encode the image with a ResNet34 \citep{He2015} into a 512d feature vector, which is fed into the set decoder.
The set decoder predicts the set of bounding boxes \emph{from this single feature vector} describing the whole image.
This is in contrast to existing region proposal networks \citep{Ren2015} for bounding box prediction where the use of the entire feature map is required for the typical anchor-based approach.
As the set encoder in our model, we use a 2-layer relation network \citep{Santoro2017} with FSPool \citep{Anonymous2019} as pooling.
This is stronger than the FSPool-only model (without RN) we used in the MNIST experiment.
We again compare this against a baseline that uses an MLP or LSTM as set decoder (matching AE-EMD \citep{Achlioptas2018} and \citep{Rezatofighi2018} for the MLP decoder, \citep{Stewart2015a} for the LSTM decoder).
Since the sets are much smaller compared to our MNIST experiments, we can use the Hungarian loss as set loss.
We perform no post-processing (such as non-maximum suppression) on the predictions of the model.
The whole model is trained end-to-end.

\begin{figure}[tp!]
    \centering
    \includegraphics[width=\linewidth, trim=0 3mm 0 0, clip]{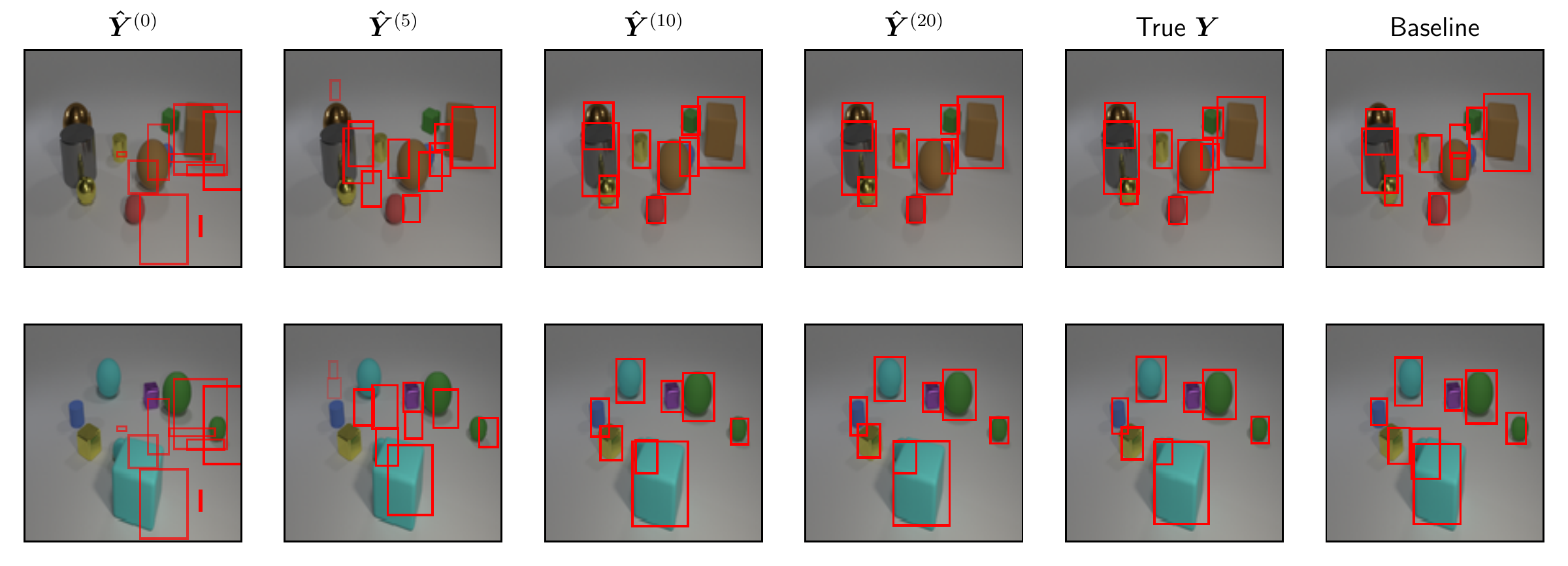}
    \vspace{-3mm}
    \caption{Progression of set prediction algorithm for bounding boxes in CLEVR. The shown MLP baseline sometimes struggles with heavily-overlapping objects and often fails to centre the object in the boxes.}
    \label{fig:box}
\end{figure}

\begin{table}[tp!]
    \centering
    \caption{Average Precision (AP) for different intersection-over-union thresholds for a predicted bounding box to be considered correct. Higher is better. Mean and standard deviation over 6 runs.}
    \label{tab:box}
    \begin{tabular}{lccccc}
        \toprule
        Model & AP\textsubscript{50} & AP\textsubscript{90} & AP\textsubscript{95} & AP\textsubscript{98} & AP\textsubscript{99} \\
        \midrule
        MLP baseline        & 99.3\tiny$\pm$0.2 & 94.0\tiny$\pm$1.9 & 57.9\tiny$\pm$7.9 &  0.7\tiny$\pm$0.2 & 0.0\tiny$\pm$0.0 \\
        RNN baseline        & 99.4\tiny$\pm$0.2 & 94.9\tiny$\pm$2.0 & 65.0\tiny$\pm$10.3 &  2.4\tiny$\pm$0.0 & 0.0\tiny$\pm$0.0 \\
        Ours (10 iters) & 98.8\tiny$\pm$0.3 & 94.3\tiny$\pm$1.5 & 85.7\tiny$\pm$3.0 & \textbf{34.5}\tiny$\pm$5.7 & \textbf{2.9}\tiny$\pm$1.2 \\
        Ours (20 iters) & \textbf{99.8}\tiny$\pm$0.0 & \textbf{98.7}\tiny$\pm$1.1 & \textbf{86.2}\tiny$\pm$7.2 & 24.3\tiny$\pm$8.0 & 1.4\tiny$\pm$0.9 \\
        Ours (30 iters) & \textbf{99.8}\tiny$\pm$0.1 & 96.7\tiny$\pm$2.4 & 75.5\tiny$\pm$12.3 & 17.4\tiny$\pm$7.7 & 0.9\tiny$\pm$0.7 \\
        \bottomrule
    \end{tabular}
\end{table}

\paragraph{Results}
We show our results in \autoref{tab:box} using the standard average precision (AP) metric used in object detection with sample predictions in \autoref{fig:box}.
Our model is able to very accurately localise the objects with high AP scores even when the intersection-over-union (IoU) threshold for a predicted box to match a groundtruth box is very strict.
In particular, our model using 10 iterations (the same it was trained with) has much better AP\textsubscript{95} and AP\textsubscript{98} than the baselines.
The shown baseline model can predict bounding boxes in the close vicinity of objects, but fails to place the bounding box precisely on the object.
This is visible from the decent performance for low IoU thresholds, but bad performance for high IoU thresholds.

We can also run our model with more inner optimisation steps than the 10 it was trained with.
Many results improve when doubling the number of steps, which shows that further minimisation of $\Lrepr(\mhY, \vz)$ is still beneficial, even if it is unseen during training.
The model ``knows'' that its prediction is still suboptimal when $\Lrepr$ is high and also how to change the set to decrease it.
This confirms that the optimisation is reasonably stable and does not diverge significantly with more steps.
Being able to change the number of steps allows for a dynamic trade-off between prediction quality and inference time depending on what is needed for a given task.

The less-strict AP metrics (which measure large mistakes) improve with more iterations, while the very strict AP\textsubscript{98} and AP\textsubscript{99} metrics consistently worsen.
This is a sign that the inner optimisation learned to reach its best prediction at exactly 10 steps, but slightly overshoots when run for longer.
The model has learned that it does not fully converge with 10 steps, so it is compensating for that by slightly biasing the inner optimisation to get a better 10 step prediction.
This is at the expense of the strictest AP metrics worsening with 20 steps, where this bias is not necessary anymore.



Bear in mind that we do not intend to directly compete against traditional object detection methods.
Our goal is to demonstrate that our model can accurately predict a set from a single feature vector, which is of general use for set prediction tasks not limited to image inputs.


\subsection{State prediction}
Lastly, we want to directly predict the full state of a scene from images on CLEVR.
This is the set of objects with their position in the 3d scene (x, y, z coordinates), shape (sphere, cylinder, cube), colour (eight colours), size (small, large), and material (metal/shiny, rubber/matte) as features.
For example, an object can be a ``small cyan metal cube'' at position (0.95, -2.83, 0.35).
We encode the categorial features as one-hot vectors and concatenate them into an 18d feature vector for each object.
Note that we do not use bounding box information, so the model has to implicitly learn which object in the image corresponds to which set element with the associated properties.
This makes it different from usual object detection tasks, since bounding boxes are required for traditional object detection models that rely on anchors.

\paragraph{Model}
We use exactly the same model as for the bounding box prediction in the previous experiment with all hyperparameters kept the same.
The only difference is that it now outputs 18d instead of 4d set elements.
For simplicity, we continue using the Hungarian loss with the Huber loss as pairwise cost, as opposed to switching to cross-entropy for the categorical features.

\begin{table}[tp!]
    \centering
\caption{Average Precision (AP) in \% for different distance thresholds of a predicted set element to be considered correct. AP\textsubscript{$\infty$} only requires all attributes to be correct, regardless of 3d position. Higher is better. Mean and standard deviation over 6 runs.}
    \label{tab:state}
    \begin{tabular}{lccccc}
        \toprule
        Model & AP\textsubscript{$\infty$} & AP\textsubscript{1} & AP\textsubscript{0.5} & AP\textsubscript{0.25} & AP\textsubscript{0.125}   \\
        \midrule
        MLP baseline & 3.6\tiny$\pm$0.5 & 1.5\tiny$\pm$0.4 & 0.8\tiny$\pm$0.3 & 0.2\tiny$\pm$0.1 & 0.0\tiny$\pm$0.0 \\
        RNN baseline & 4.0\tiny$\pm$1.9 & 1.8\tiny$\pm$1.2 & 0.9\tiny$\pm$0.5 & 0.2\tiny$\pm$0.1 & 0.0\tiny$\pm$0.0 \\
        Ours (10 iters) & 72.8\tiny$\pm$2.3 & 59.2\tiny$\pm$2.8 & 39.0\tiny$\pm$4.4 & 12.4\tiny$\pm$2.5 & 1.3\tiny$\pm$0.4 \\
        Ours (20 iters) & 84.0\tiny$\pm$4.5 & 80.0\tiny$\pm$4.9 & \textbf{57.0}\tiny$\pm$12.1 & \textbf{16.6}\tiny$\pm$9.0 & \textbf{1.6}\tiny$\pm$0.9 \\
        Ours (30 iters) & \textbf{85.2}\tiny$\pm$4.8 & \textbf{81.1}\tiny$\pm$5.2 & 47.4\tiny$\pm$17.6 & 10.8\tiny$\pm$9.0 & 0.6\tiny$\pm$0.7 \\
        \bottomrule
    \end{tabular}
\end{table}

\paragraph{Results}
We show our results in \autoref{tab:state} and give sample outputs in \autoref{app:outputs}.
The evaluation metric is the standard average precision as used in object detection, with the modification that a prediction is considered correct if there is a matching groundtruth object with exactly the same properties and within a given Euclidean distance of the 3d coordinates.
Our model clearly outperforms the baselines.
This shows that our model is also suitable for modeling high-dimensional set elements.

When evaluating with more steps than our model was trained with, the difference in the more lenient metrics improves even up to 30 iterations.
This time, the results for 20 iterations are all better than for 10 iterations.
This suggests that 10 steps is too few to reach a good solution in training, likely due to the higher difficulty of this task compared to the bounding box prediction.
Still, the representation $\vz$ that the input encoder produces is good enough such that minimising $\Lrepr$ more at evaluation time leads to better results.
When going up to 30 iterations, the result for predicting the state only (excluding 3d position) improves further, but the accuracy of the 3d position worsens.
We believe that this is again caused by overshooting the target due to the bias of training the model with only 10 iterations.


\section{Discussion}
In this paper we showed how to predict sets with a deep neural network in a way that respects the set structure of the problem.
We demonstrated in our experiments that this works for small (size 10) and large sets (up to size 342), as well as low-dimensional (2d) and higher-dimensional (18d) set elements.
Our model is consistently better than the baselines across all experiments by predicting sets properly, rather than predicting a list and pretending that it is a set.


The improved results of our approach come at a higher computational cost.
Each evaluation of the network requires time for $O(T)$ passes through the set encoder, which makes training take about 75\% longer on CLEVR with $T=10$.
Keep in mind that this only involves the set encoder (which can be fairly small), not the input encoder (such as a CNN or RNN) that produces the target $\vz$.
Further study into representationally-powerful and efficient set encoders such as RN \citep{Santoro2017} and FSPool \citep{Anonymous2019} -- which we found to be critical for good results in our experiments -- would be of considerable interest, as it could speed up the convergence and thus inference time of our method.
Another promising approach is to better initialise $Y^{(0)}$ -- perhaps with an MLP -- so that the set needs to be changed less to minimise $\Lrepr$.
Our model would act as a set-aware refinement method of the MLP prediction.
Lastly, stopping criteria other than iterating for a fixed 10 steps can be used, such as stopping when $\Lrepr(\enc(\mhY), \vz)$ is below a fixed threshold: this would stop when the encoder thinks $\mhY$ is of a certain quality corresponding to that threshold.

Our algorithm may be suitable for generating samples under other invariance properties.
For example, we may want to generate images of objects where the rotation of the object does not matter (such as aerial images).
Using our decoding algorithm with a rotation-invariant image encoder could predict images without forcing the model to choose a fixed orientation of the image, which could be a useful inductive bias.

In conclusion, we are excited about enabling a wider variety of set prediction problems to be tackled with deep neural networks.
Our main idea should be readily extensible to similar domains such as graphs to allow for better graph prediction, for example molecular graph generation or end-to-end scene graph prediction from images.
We hope that our model inspires further research into graph generation, stronger object detection models, and -- more generally -- a more principled approach to set prediction.

\bibliographystyle{icml2019}
\bibliography{refs}

\newpage
\appendix
\section{Proof of permutation-equivariance}\label{app:proof}

\begin{definition}
    A function $f: \R^{n \times c} \to \R^{d}$ is permutation-invariant iff it satisfies:
    \begin{equation}
        f(\mX) = f(\mP\mX)
    \end{equation}
    for all permutation matrices $\mP$.
\end{definition}

\begin{definition}
    A function $g: \R^{n \times c} \to \R^{n \times d}$ is permutation-equivariant iff it satisfies:
    \begin{equation}
        \mP g(\mX) = g(\mP\mX)
    \end{equation}
    for all permutation matrices $\mP$.
\end{definition}

\begin{thm}
    The gradient of a permutation-invariant function $f: \R^{n \times c} \to \R^d$ with respect to its input is permutation-equivariant:
    \begin{equation}
        \mP \frac{\partial f(\mX)}{\partial \mX}
        = \frac{\partial f(\mP\mX)}{\partial \mP\mX}
    \end{equation}
\end{thm}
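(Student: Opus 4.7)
The plan is to start from the defining property $f(\mX)=f(\mP\mX)$ and differentiate both sides with respect to $\mX$. The right-hand side requires the chain rule, and the key observation is that differentiating $\mP\mX$ entry-by-entry with respect to $\mX$ produces a factor of $\mP^T$, which is exactly what we need in order to match the claim after using orthogonality of permutation matrices.

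More concretely, I would first fix an output coordinate so that the statement reduces to a scalar-valued $f$; the vector-valued case then follows componentwise. Setting $\mY = \mP\mX$, I would compute
\begin{equation}
\frac{\partial \mY_{k\ell}}{\partial \mX_{ij}} = \mP_{ki}\,\delta_{\ell j},
\end{equation}
so that, by the chain rule,
\begin{equation}
\frac{\partial f(\mP\mX)}{\partial \mX_{ij}}
= \sum_{k,\ell} \frac{\partial f(\mY)}{\partial \mY_{k\ell}}\,\mP_{ki}\,\delta_{\ell j}
= \sum_{k} (\mP^T)_{ik}\,\frac{\partial f(\mP\mX)}{\partial (\mP\mX)_{kj}}.
\end{equation}
In matrix form this reads $\partial f(\mP\mX)/\partial \mX = \mP^T\,\partial f(\mP\mX)/\partial (\mP\mX)$. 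Next I would invoke permutation invariance $f(\mP\mX)=f(\mX)$ to replace the left-hand side by $\partial f(\mX)/\partial \mX$, giving
\begin{equation}
\frac{\partial f(\mX)}{\partial \mX} = \mP^T\,\frac{\partial f(\mP\mX)}{\partial \mP\mX}.
\end{equation}
Finally, left-multiplying by $\mP$ and using $\mP\mP^T=\mI$ yields the desired identity.

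The calculation is essentially mechanical, so there is no substantial obstacle; the only thing to be careful about is the bookkeeping of indices and the convention that the Jacobian of $f$ with respect to $\mX$ is arranged as an object of the same shape as $\mX$ (so that $\mP$ acting on the row index matches the action on the set elements). A secondary point to mention explicitly is that the identity is being asserted slice-by-slice in the $d$ output dimensions, which is why reducing to scalar $f$ is sufficient. No appeal beyond the chain rule, orthogonality of $\mP$, and the hypothesis of permutation invariance is required.
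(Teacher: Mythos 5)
Your proof is correct and follows essentially the same route as the paper's: substitute $f(\mX)=f(\mP\mX)$ by invariance, apply the chain rule to extract a factor of $\mP^T$, and cancel it using $\mP\mP^T=\mI$. Your explicit index bookkeeping and the remark about handling the $d$ output components slice-by-slice are just a more careful rendering of the same argument.
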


\begin{proof}
    Using Definition 1, the chain rule, and the orthogonality of $\mP$:
    \begin{align}
        \mP \frac{\partial f(\mX)}{\partial \mX}
        &= \mP \frac{\partial f(\mP\mX)}{\partial \mX} \\
        &= \mP \frac{\partial \mP\mX}{\partial \mX} \frac{\partial f(\mP\mX)}{\partial \mP\mX} \\
        &= \mP \mP^T \frac{\partial f(\mP\mX)}{\partial \mP\mX} \\
        &= \frac{\partial f(\mP\mX)}{\partial \mP\mX}
    \end{align}
\end{proof}

\section{Details}\label{app:details}
In our algorithm, $\eta$ was chosen in initial experiments and we did not tune it beyond that.
We did this by increasing $\eta$ until the output set visibly changed between inner optimisation steps when the set encoder is randomly initialised.
This makes it so that changing the set encoder weights has a noticeable effect rather than being stuck with $\mhY^{(T)} \approx \mhY^{(0)}$.

$T=10$ was chosen because it seemed to be enough to converge to good solutions on MNIST.
We simply kept this for the supervised experiments on CLEVR.

In the supervised experiments, we would often observe large spikes in training that cause the model diverge when $\lambda = 1$.
By changing around various parameters, we found that reducing $\lambda$ eliminated most of this issue and also made training converge to better solutions.
Much smaller values than 0.1 converged to worse solutions.
This is likely because the issue of not having the $\Lrepr(\mY, \vz)$ term in the outer loss in the first place ($\lambda=0$) is present again -- see \autoref{sec:predict}.

For all experiments, we used Adam with the default momentum values and batch size 32 for the outer optimisation.
The only hyperparameter we tuned in the experiments is the learning rate of the outer optimisation.
Every individual experiment is run on a single 1080 Ti GPU.

The MLP decoder baseline has 3 layers with 256 (MNIST) or 512 (CLEVR) neurons in the first two layers and the number of channels of the output set in the task in the third layer.
The LSTM decoder linearly transforms the latent space into 256 (MNIST) or 512 (CLEVR) dimensions, which is used as initial cell state of the LSTM.
The LSTM is run for the same number of steps as the maximum set size, and the outputs of these steps is each linearly transformed into the output dimensionality.

\subsection{MNIST}
For MNIST, we train our model and the baseline model for 100 epochs to make sure that they have converged.
Both models have a 3-layer MLP with ReLU activations and 256, 256, and 64 neurons in the three layers.
For simplicity, sets are padded to a fixed size for FSPool.
FSPool has 20 pieces in its piecewise linear function.
We tried learning rates in $\{1.0, 0.1, 0.03, 0.01, 0.003, 0.001, 0.0003, 0.0001, 0.00001\}$ and chose $0.01$.
For the baselines, none of the other learning rates performed significantly better than the one we chose.

The baselines are trained slightly differently to our model.
They do not output mask values natively, so we have to train them with the mask values in the training target.
In other words, they are trained to predict x coordinate, y coordinate, and the mask for each point.
We found it crucial to explicitly add 1 to the mask in the baseline model for good results.
Otherwise, many of the baseline outputs get stuck in the local optimum of predicting the (0, 0, 0) point and the output is too sparse.

\subsection{CLEVR}
We train our model and the baselines models for 100 epochs on the training set of CLEVR and evaluate on the validation set, since no ground-truth scene information is available for the test set.
All images are resized to 128$\times$128 resolution.
The set encoder is a 2-layer Relation Network with ReLU activation between the two layers, wherein the sum pooling is replaced with FSPool.
The two layers have 512 neurons each.
Because we use the Hungarian loss instead of the Chamfer loss here, including the mask feature in the target set does not worsen results, so we include the mask target for both the baseline and our model for consistency.
To tune the learning rate, we started with the learning rate found for MNIST and decreased it similarly-sized steps until the training accuracy after 100 epochs worsened.
We settled on 0.0003 as learning rate for both the bounding box and the state prediction task.
All other hyperparameters are kept the same as for MNIST.
The ResNet34 that encodes the image is not pre-trained.

\section{Fixed results}\label{app:fixed}

Due to a bug, the practical trick of computing the loss over all $\mhY^{(t)}$ instead of only the last $\mhY^{(T)}$ was not used for the experiments on CLEVR.
Also, all evaluation numbers on CLEVR with bounding boxes are overestimates, but consistent across the models.
These are minor problems that do not affect the conclusions of this paper.
The first bug was found by Sangwoo Mo \footnote{\texttt{swmo@kaist.ac.kr}, Korea Advanced Institute of Science and Technology.}, the second bug was found by Chao Feng \footnote{\texttt{chf2018@med.cornell.edu}, Weill Cornell Graduate School of Medical Sciences.}.
We are very grateful to them for finding and letting us know of these bugs.
The following tables show the results with the bugs fixed when using exactly the same hyperparameters as before:
\begin{itemize}
    \item \autoref{tab:box-fixed1} shows the results when only fixing the evaluation bug (note the different AP metrics compared to \autoref{tab:box}) for bounding box prediction,
    \item \autoref{tab:box-fixed2} shows the results when fixing the evaluation bug and also computing the loss on all $\mhY^{(t)}$ for bounding box prediction,
    \item \autoref{tab:state-fixed} shows the results when computing the loss on all $\mhY^{(t)}$ for state prediction.
\end{itemize}

In summary, our model still handily beats the baseline models.
On bounding box prediction, interestingly the results with the set loss bug fixed tend to be worse than when only using the last prediction to compute the loss. 
On state prediction, fixing the set loss bug improves almost all results.
The largest improvement is on the AP\textsubscript{0.5} metric with an increase from 57.0 for the best result to 66.8 for the best result.
Notably, running more iterations now only results in strict improvements without worsening on any metric.

\begin{table}
    \centering
    \caption{Fixed bounding box evaluation, still computing loss only on the last $\mhY^{(T)}$. Average Precision (AP) for different intersection-over-union thresholds for a predicted bounding box to be considered correct. Higher is better. Mean and standard deviation over 6 runs.}
    \label{tab:box-fixed1}
    \begin{tabular}{lccccc}
        \toprule
        Model & AP\textsubscript{50} & AP\textsubscript{60} & AP\textsubscript{70} & AP\textsubscript{80} & AP\textsubscript{90} \\
        \midrule
        MLP baseline & 87.6\tiny$\pm$2.4 & 73.6\tiny$\pm$3.9 & 48.1\tiny$\pm$5.0 & 15.1\tiny$\pm$2.8 & 0.1\tiny$\pm$0.1 \\
        RNN baseline & 85.8\tiny$\pm$2.5 & 70.0\tiny$\pm$2.8 & 43.9\tiny$\pm$2.4 & 13.3\tiny$\pm$1.5 & 0.2\tiny$\pm$0.0 \\
        Ours (10 iters) & 94.0\tiny$\pm$0.4 & 90.6\tiny$\pm$0.6 & {82.2}\tiny$\pm$1.4 & \textbf{58.9}\tiny$\pm$3.4 & \textbf{16.0}\tiny$\pm$2.4 \\
        Ours (20 iters) & \textbf{98.0}\tiny$\pm$0.5 & \textbf{94.4}\tiny$\pm$0.9 & \textbf{83.0}\tiny$\pm$1.6 & 55.0\tiny$\pm$3.3 & 12.3\tiny$\pm$2.6 \\
        Ours (30 iters) & {95.9}\tiny$\pm$2.1 & {89.4}\tiny$\pm$3.3 & 74.8\tiny$\pm$3.9 & 46.1\tiny$\pm$3.4 & 9.4\tiny$\pm$1.9 \\
        \bottomrule
    \end{tabular}
    
    \vspace{1cm}

    \centering
    \caption{Fixed bounding box evaluation and computing the loss on all intermediate sets. Average Precision (AP) for different intersection-over-union thresholds for a predicted bounding box to be considered correct. Higher is better. Mean and standard deviation over 6 runs. Baselines are the same as in \autoref{tab:box-fixed1}.}
    \label{tab:box-fixed2}
    \begin{tabular}{lccccc}
        \toprule
        Model & AP\textsubscript{50} & AP\textsubscript{60} & AP\textsubscript{70} & AP\textsubscript{80} & AP\textsubscript{90} \\
        \midrule
        Ours (10 iters) & 97.1\tiny$\pm$1.1 & \textbf{91.4}\tiny$\pm$2.8 & \textbf{73.4}\tiny$\pm$5.1 & \textbf{35.4}\tiny$\pm$4.8 & \textbf{3.3}\tiny$\pm$1.3 \\
        Ours (20 iters) & \textbf{97.5}\tiny$\pm$1.0 & {90.8}\tiny$\pm$2.6 & {69.4}\tiny$\pm$4.9 & 31.3\tiny$\pm$4.7 & 2.9\tiny$\pm$1.2 \\
        Ours (30 iters) & {97.1}\tiny$\pm$1.0 & {88.6}\tiny$\pm$2.9 & 64.4\tiny$\pm$5.3 & 27.7\tiny$\pm$5.2 & 2.6\tiny$\pm$1.2\\
        \bottomrule
    \end{tabular}
    
    \vspace{1cm}

    \centering
\caption{State predicton when computing the loss on all intermediate sets. Average Precision (AP) in \% for different distance thresholds of a predicted set element to be considered correct. AP\textsubscript{$\infty$} only requires all attributes to be correct, regardless of 3d position. Higher is better. Mean and standard deviation over 6 runs. Baselines are the same as in \autoref{tab:state}.}
    \label{tab:state-fixed}
    \begin{tabular}{lccccc}
        \toprule
        Model & AP\textsubscript{$\infty$} & AP\textsubscript{1} & AP\textsubscript{0.5} & AP\textsubscript{0.25} & AP\textsubscript{0.125}   \\
        \midrule
        Ours (10 iters) & 76.4\tiny$\pm$2.5 & 69.6\tiny$\pm$2.8 & 53.7\tiny$\pm$4.3 & 18.2\tiny$\pm$3.7 & 2.2\tiny$\pm$0.6 \\
        Ours (20 iters) & 80.9\tiny$\pm$1.9 & 77.3\tiny$\pm$2.3 & {63.9}\tiny$\pm$4.1 & {22.7}\tiny$\pm$4.4 & {2.7}\tiny$\pm$0.6 \\
        Ours (30 iters) & \textbf{82.5}\tiny$\pm$1.7 & \textbf{79.8}\tiny$\pm$2.0 & \textbf{66.8}\tiny$\pm$4.1 & \textbf{23.6}\tiny$\pm$4.6 & \textbf{2.8}\tiny$\pm$0.8 \\
        \bottomrule
    \end{tabular}
\end{table}

\clearpage
\section{Additional outputs}\label{app:outputs}

\begin{figure}[bp!]
    \centering
    \includegraphics[width=\linewidth]{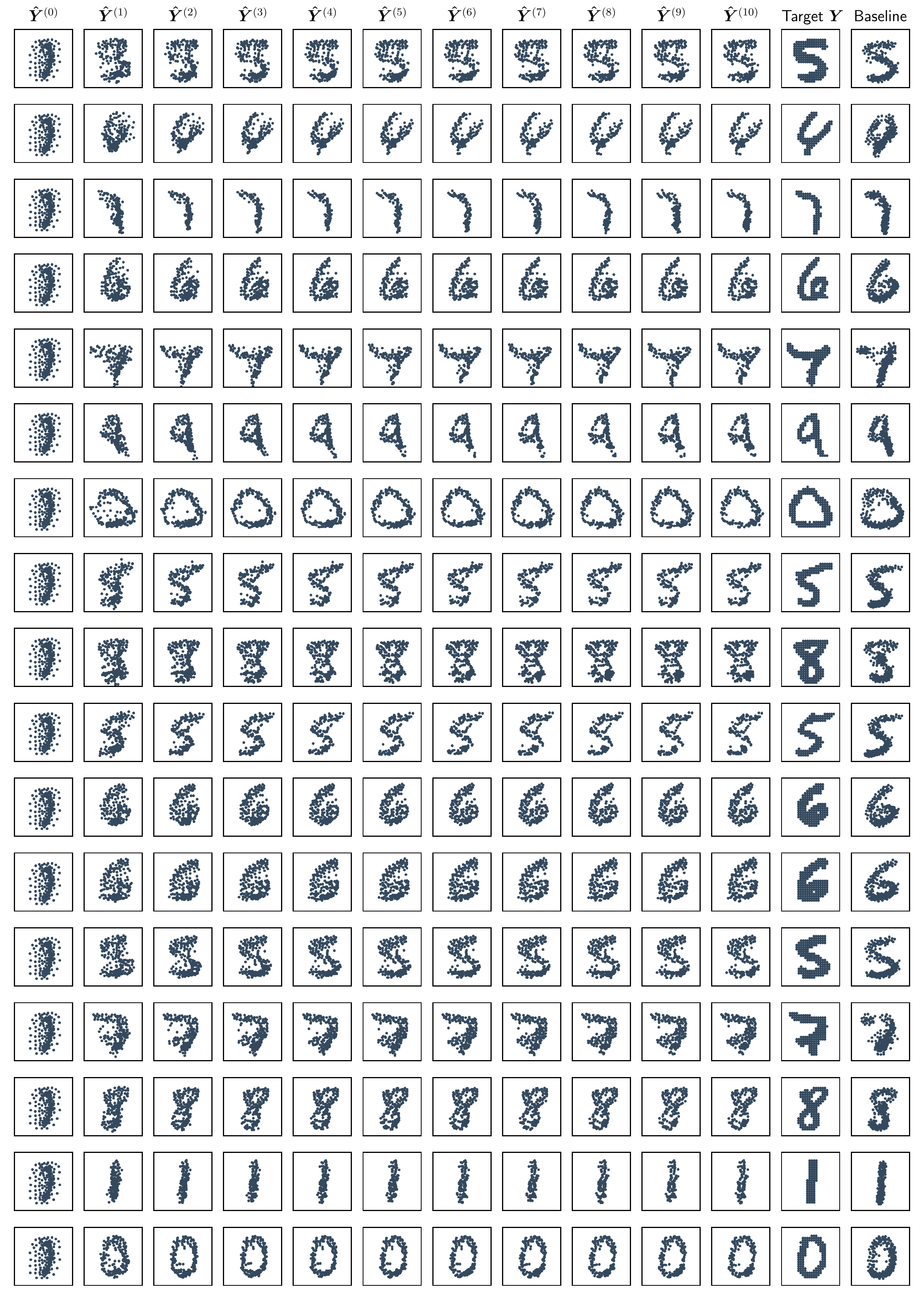}
    \caption{Progression of set prediction algorithm on MNIST.}
    \label{fig:mnist-full}
\end{figure}

\begin{figure}[tp!]
    \centering
    \includegraphics[width=\linewidth]{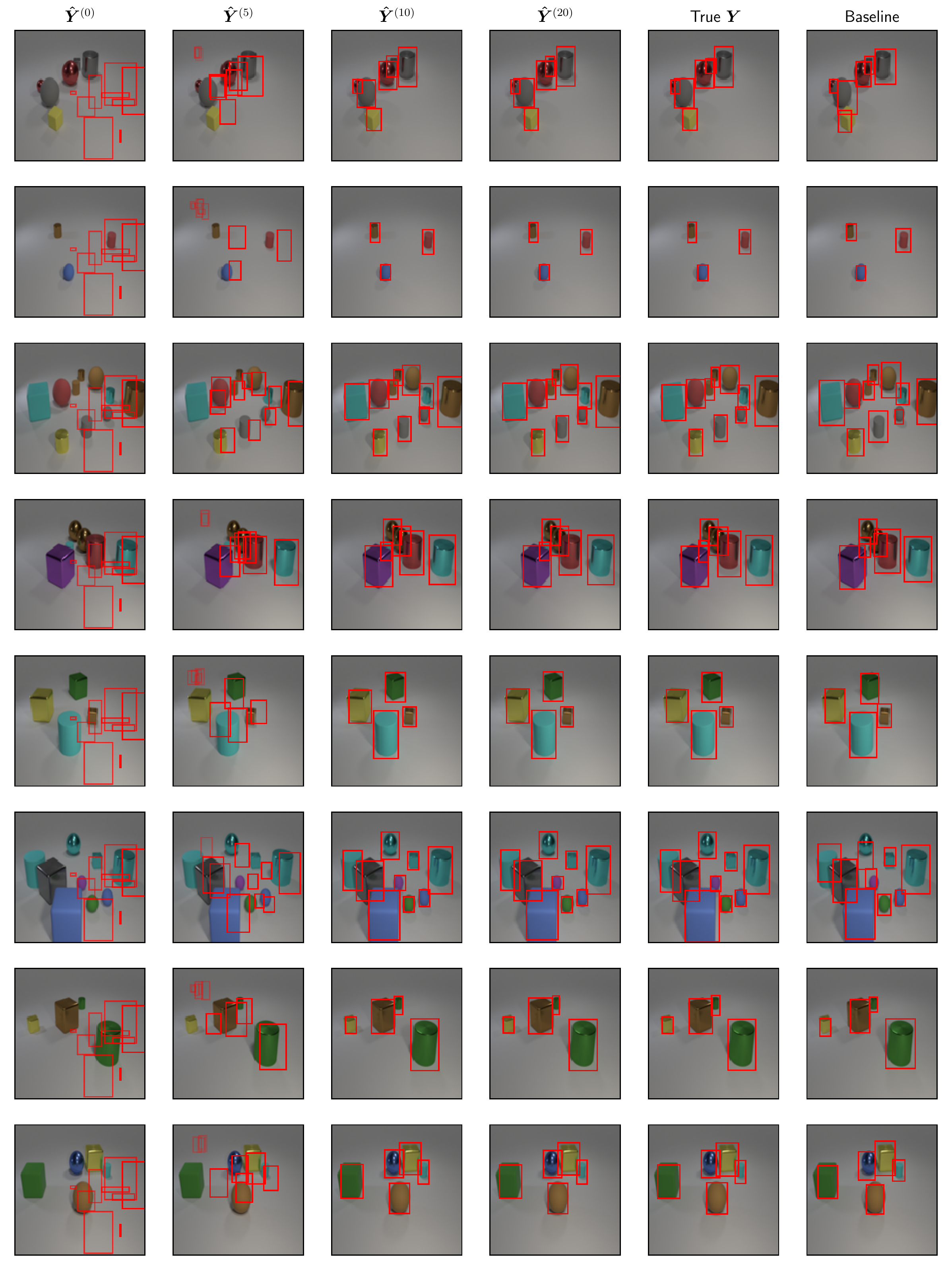}
    \caption{Progression of set prediction algorithm on CLEVR bounding boxes.}
    \label{fig:clevr-full}
\end{figure}

\begin{table}[tph!]
    \centering
    \tiny
    \caption{Progression of set prediction algorithm on CLEVR state prediction. Red text denotes a wrong attribute. Objects are sorted by x coordinate, so they are sometimes misaligned with wrongly-coloured red text (see third example: red entries in $\mhY^{(20)}$ and bottom two red entries in baseline).}
    
\includegraphics[width=0.22\linewidth]{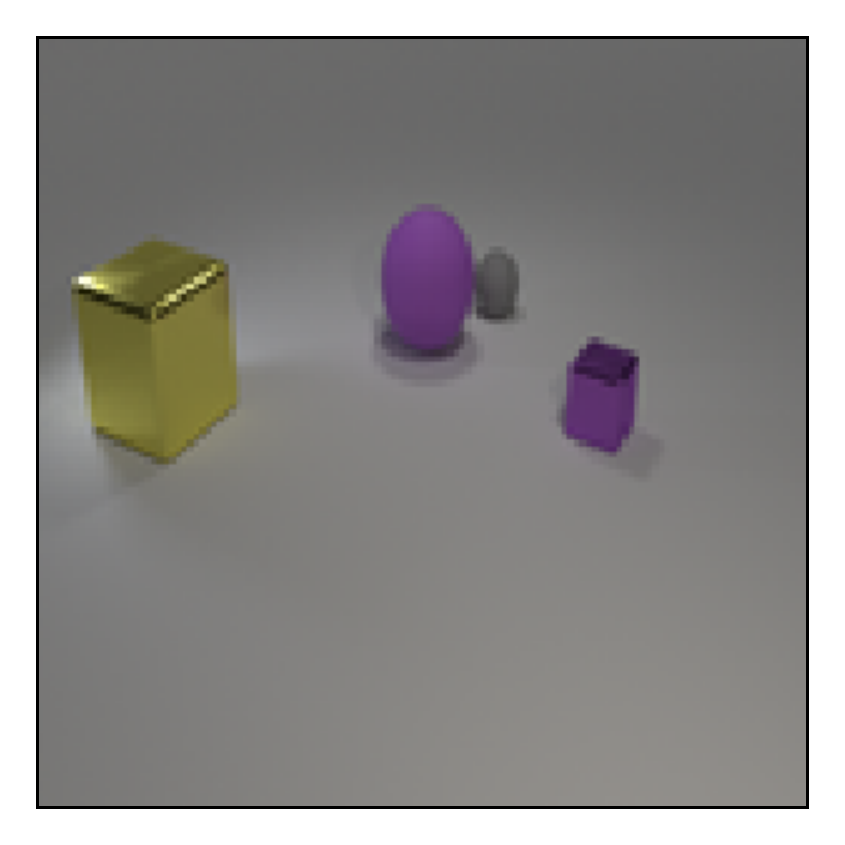}
\begin{tabular}{ccccc}
\toprule
$\hat{\bm{Y}}^{(5)}$ & $\hat{\bm{Y}}^{(10)}$ & $\hat{\bm{Y}}^{(20)}$ & True $\bm{Y}$ & Baseline\\
\midrule
(-0.14, 1.16, 3.57) & (-2.33, -2.41, 0.73) & (-2.33, -2.42, 0.78) & (-2.42, -2.40, 0.70) & (-1.65, -2.85, 0.69)\\
large \textcolor{red}{purple} \textcolor{red}{rubber} \textcolor{red}{sphere} & large yellow metal cube & large yellow metal cube & large yellow metal cube & large yellow metal cube\\
(0.01, 0.12, 3.42) & (-1.20, 1.27, 0.67) & (-1.21, 1.20, 0.65) & (-1.18, 1.25, 0.70) & (-0.95, 1.08, 0.68)\\
large \textcolor{red}{gray} \textcolor{red}{metal} \textcolor{red}{cube} & large purple rubber sphere & large purple rubber sphere & large purple rubber sphere & large \textcolor{red}{green} rubber sphere\\
(0.67, 0.65, 3.38) & (-0.96, 2.54, 0.36) & (-0.96, 2.59, 0.36) & (-1.02, 2.61, 0.35) & (-0.40, 2.14, 0.35)\\
small \textcolor{red}{purple} \textcolor{red}{metal} \textcolor{red}{cube} & small gray rubber sphere & small gray rubber sphere & small gray rubber sphere & small \textcolor{red}{red} rubber sphere\\
(0.67, 1.14, 2.96) & (1.61, 1.57, 0.36) & (1.58, 1.62, 0.38) & (1.74, 1.53, 0.35) & (1.68, 1.77, 0.35)\\
small purple \textcolor{red}{rubber} \textcolor{red}{sphere} & small \textcolor{red}{yellow} metal cube & small purple metal cube & small purple metal cube & small \textcolor{red}{brown} metal cube\\
\bottomrule
\end{tabular}

\includegraphics[width=0.22\linewidth]{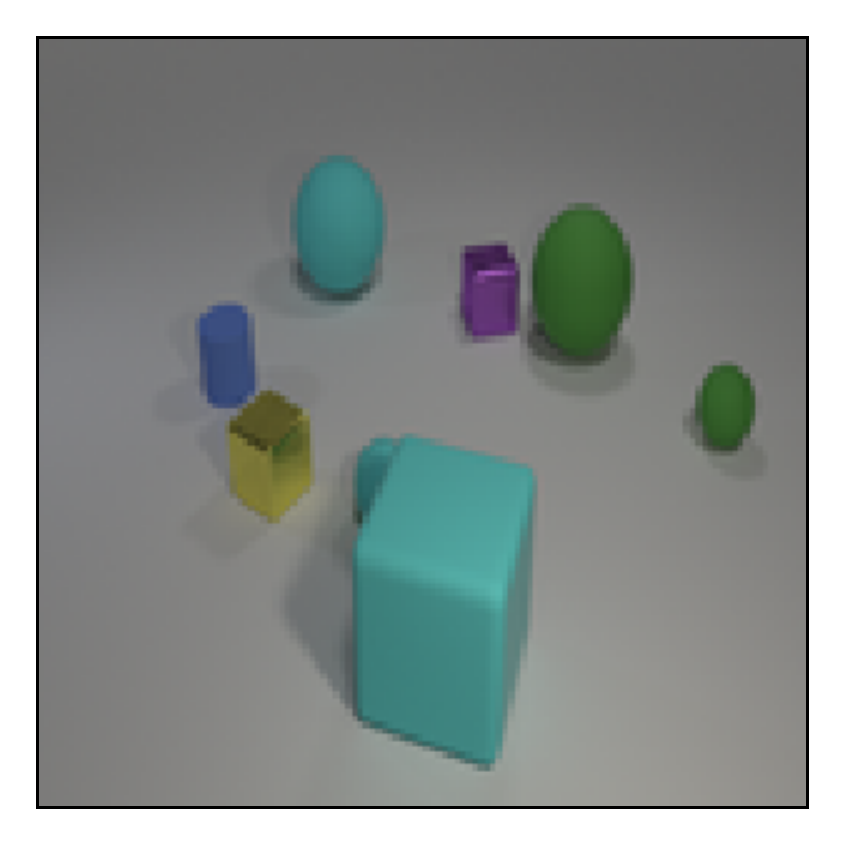}
\begin{tabular}{ccccc}
\toprule
$\hat{\bm{Y}}^{(5)}$ & $\hat{\bm{Y}}^{(10)}$ & $\hat{\bm{Y}}^{(20)}$ & True $\bm{Y}$ & Baseline\\
\midrule
(-0.29, 1.14, 3.73) & (-2.78, 0.86, 0.72) & (-2.62, 0.83, 0.68) & (-2.88, 0.78, 0.70) & (-2.42, 0.63, 0.71)\\
\textcolor{red}{small} \textcolor{red}{purple} \textcolor{red}{metal} \textcolor{red}{cube} & large cyan rubber sphere & large cyan rubber sphere & large cyan rubber sphere & large \textcolor{red}{purple} rubber sphere\\
(-0.11, -0.37, 3.65) & (-2.17, -1.59, 0.38) & (-2.12, -1.58, 0.49) & (-2.14, -1.63, 0.35) & (-2.40, -2.07, 0.35)\\
small \textcolor{red}{brown} \textcolor{red}{metal} \textcolor{red}{cube} & small blue rubber cylinder & small blue rubber cylinder & small blue rubber cylinder & small \textcolor{red}{green} rubber cylinder\\
(0.08, 0.56, 3.84) & (-0.45, 2.19, 0.40) & (-0.60, 2.23, 0.29) & (-0.78, 1.97, 0.35) & (-0.74, 2.46, 0.33)\\
\textcolor{red}{large} \textcolor{red}{cyan} \textcolor{red}{rubber} cube & small purple metal cube & small purple metal cube & small purple metal cube & small \textcolor{red}{cyan} metal cube\\
(0.69, -0.43, 3.55) & (-0.14, -2.15, 0.38) & (-0.30, -1.99, 0.32) & (-0.38, -2.06, 0.35) & (0.30, -1.86, 0.34)\\
small \textcolor{red}{brown} \textcolor{red}{rubber} \textcolor{red}{sphere} & small yellow metal cube & small yellow metal cube & small yellow metal cube & small \textcolor{red}{gray} \textcolor{red}{rubber} \textcolor{red}{sphere}\\
(1.12, 0.21, 3.83) & (0.53, 2.56, 0.70) & (0.27, 2.46, 0.72) & (0.42, 2.56, 0.70) & (0.69, -2.10, 0.36)\\
large \textcolor{red}{cyan} rubber \textcolor{red}{cube} & large green rubber sphere & large green rubber sphere & large green rubber sphere & \textcolor{red}{small} \textcolor{red}{red} \textcolor{red}{metal} \textcolor{red}{cube}\\
(1.23, -0.25, 3.58) & (0.93, -1.41, 0.35) & (0.86, -1.31, 0.27) & (0.81, -1.30, 0.35) & (1.12, 2.28, 0.70)\\
small cyan rubber sphere & small cyan rubber sphere & small cyan rubber sphere & small cyan rubber sphere & \textcolor{red}{large} cyan rubber sphere\\
(1.73, 1.04, 3.57) & (2.50, -2.08, 0.76) & (2.64, -2.05, 0.76) & (2.56, -1.94, 0.70) & (2.55, -2.26, 0.73)\\
\textcolor{red}{small} cyan rubber \textcolor{red}{sphere} & large cyan rubber cube & large cyan rubber cube & large cyan rubber cube & large \textcolor{red}{yellow} rubber cube\\
(2.06, 1.94, 3.81) & (2.61, 2.59, 0.33) & (2.75, 2.73, 0.35) & (2.74, 2.64, 0.35) & (2.99, 2.59, 0.35)\\
\textcolor{red}{large} \textcolor{red}{brown} rubber sphere & small green rubber sphere & small green rubber sphere & small green rubber sphere & small \textcolor{red}{purple} rubber sphere\\
\bottomrule
\end{tabular}

\includegraphics[width=0.22\linewidth]{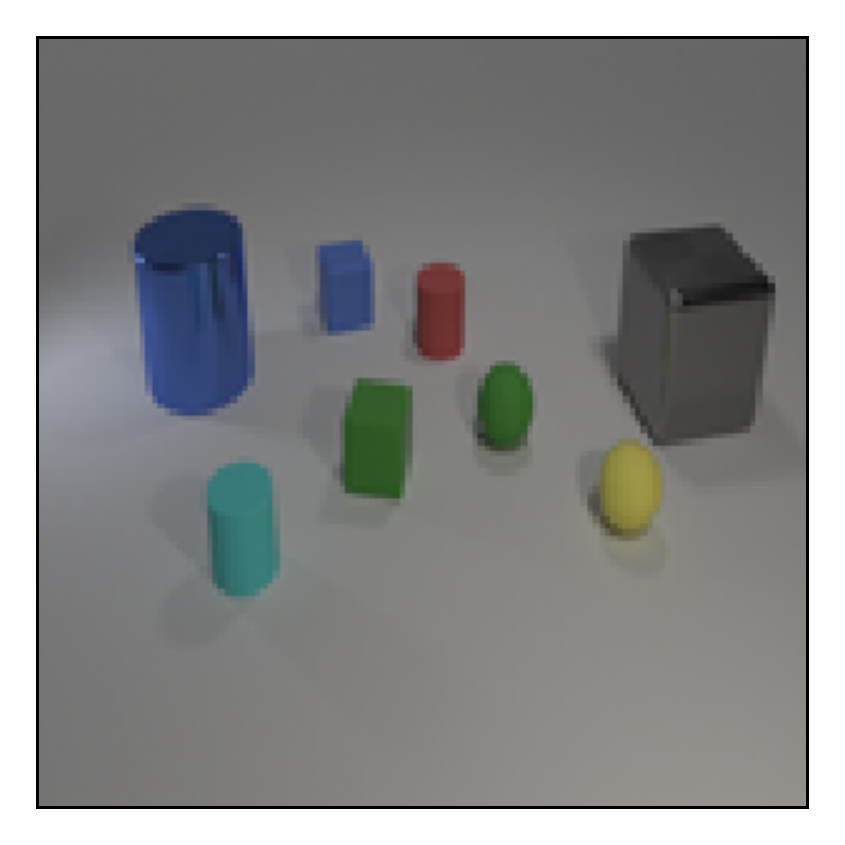}
\begin{tabular}{ccccc}
\toprule
$\hat{\bm{Y}}^{(5)}$ & $\hat{\bm{Y}}^{(10)}$ & $\hat{\bm{Y}}^{(20)}$ & True $\bm{Y}$ & Baseline\\
\midrule
(0.22, 0.12, 3.47) & (-2.76, -1.42, 0.68) & (-2.68, -1.64, 0.77) & (-2.62, -1.76, 0.70) & (-2.47, -1.73, 0.70)\\
\textcolor{red}{small} \textcolor{red}{brown} \textcolor{red}{rubber} \textcolor{red}{cube} & large blue metal cylinder & large blue metal cylinder & large blue metal cylinder & large \textcolor{red}{cyan} metal cylinder\\
(0.41, 0.11, 3.77) & (-1.56, -0.61, 0.35) & (-2.43, 0.03, 0.34) & (-2.29, 0.49, 0.35) & (-2.42, 0.09, 0.36)\\
\textcolor{red}{large} \textcolor{red}{gray} \textcolor{red}{metal} cube & small blue rubber \textcolor{red}{cylinder} & small blue rubber cube & small blue rubber cube & small blue rubber \textcolor{red}{cylinder}\\
(0.50, 0.44, 3.61) & (-1.08, 0.23, 0.33) & (-1.00, 1.18, 0.33) & (-0.93, 1.15, 0.35) & (-1.24, 1.16, 0.36)\\
small \textcolor{red}{gray} rubber \textcolor{red}{cube} & small \textcolor{red}{green} rubber \textcolor{red}{cube} & small red rubber cylinder & small red rubber cylinder & small red rubber \textcolor{red}{cube}\\
(0.83, 0.53, 3.45) & (-0.07, 0.97, 0.36) & (-0.01, -1.00, 0.46) & (0.28, -2.84, 0.35) & (0.39, 0.20, 0.33)\\
small cyan rubber \textcolor{red}{sphere} & small \textcolor{red}{green} rubber cylinder & small \textcolor{red}{green} rubber \textcolor{red}{cube} & small cyan rubber cylinder & small \textcolor{red}{red} rubber \textcolor{red}{sphere}\\
(0.86, 0.85, 3.50) & (0.28, -2.44, 0.49) & (0.21, -2.88, 0.40) & (0.29, -0.98, 0.35) & (0.56, -3.11, 0.35)\\
small \textcolor{red}{gray} rubber \textcolor{red}{sphere} & small \textcolor{red}{cyan} rubber \textcolor{red}{cylinder} & small \textcolor{red}{cyan} rubber \textcolor{red}{cylinder} & small green rubber cube & small \textcolor{red}{yellow} rubber \textcolor{red}{cylinder}\\
(1.86, 2.34, 3.80) & (1.36, -0.63, 0.38) & (0.99, 0.17, 0.37) & (0.92, 0.54, 0.35) & (0.90, 0.64, 0.35)\\
\textcolor{red}{large} \textcolor{red}{gray} \textcolor{red}{metal} \textcolor{red}{cube} & small green rubber sphere & small green rubber sphere & small green rubber sphere & small green rubber sphere\\
(1.97, 0.55, 3.61) & (2.01, 3.07, 0.65) & (1.97, 2.89, 0.39) & (2.04, 2.78, 0.70) & (2.39, 0.27, 0.36)\\
\textcolor{red}{small} \textcolor{red}{green} \textcolor{red}{rubber} \textcolor{red}{sphere} & large gray metal cube & large gray metal cube & large gray metal cube & \textcolor{red}{small} \textcolor{red}{yellow} \textcolor{red}{rubber} \textcolor{red}{sphere}\\
 & (2.69, 0.63, 0.34) & (2.87, 0.51, 0.25) & (2.70, 0.67, 0.35) & (2.44, 2.55, 0.68)\\
\textcolor{red}{} & small yellow rubber sphere & small yellow rubber sphere & small yellow rubber sphere & \textcolor{red}{large} \textcolor{red}{gray} \textcolor{red}{metal} \textcolor{red}{cube}\\
\bottomrule
\end{tabular}

\end{table}

\end{document}